\renewcommand{\algorithmiccomment}[1]{\bgroup\hfill\tiny\#~#1\egroup}
\newtheorem{definition}{Definition}
\newtheorem{theorem}{Theorem}
\newtheorem{lemma}[theorem]{Lemma}
\newtheorem{problem}{Problem}%[section] % Comment out [section] to 
\newtheorem{case}{Case}
\newsavebox\IBoxA \newsavebox\IBoxB \newlength\IHeight
\newcommand\TwoFig[6]{% Image1 Caption1 Label1 Image2 ...
  \sbox\IBoxA{\includegraphics[width=0.48\textwidth]{#1}}
  \sbox\IBoxB{\includegraphics[width=0.48\textwidth]{#4}}%
  \ifdim\ht\IBoxA>\ht\IBoxB
    \setlength\IHeight{\ht\IBoxB}\else\setlength\IHeight{\ht\IBoxA}\fi%
  \begin{figure*}[!htb]
  \minipage[t]{0.48\textwidth}\centering
  \includegraphics[height=\IHeight]{#1}
  \caption{#2}\label{#3}
  \endminipage\hfill
  \minipage[t]{0.48\textwidth}\centering
  \includegraphics[height=\IHeight]{#4}
  \caption{#5}\label{#6}
  \endminipage 
  \end{figure*}%
}
\begin{document}

\title{Adversarial Generation of Real-time Feedback with Neural Networks \\for Simulation-Based Training}

% \title{Adversarial Real-time Feedback for Simulation-based Training}

% with \\
% Neural Networks for Simulation-based Training}

\author{Xingjun Ma, Sudanthi Wijewickrema, Shuo Zhou, Yun Zhou, \\
{\bf Zakaria Mhammedi, Stephen O'Leary, James Bailey} \\
The University of Melbourne, Australia  \\
\{xingjunm@student, swijewickrem@, zhous@student, yun.zhou@, \\ 
zmhammedi@student., sjoleary@, baileyj@\}unimelb.edu.au
}

\maketitle

\begin{abstract}
Simulation-based training (SBT) is gaining popularity as a low-cost and convenient training technique in a vast range of applications. However, for a SBT platform to be fully utilized as an effective training tool, it is essential that feedback on performance is provided automatically in real-time during training. It is the aim of this paper to develop an efficient and effective feedback generation method for the provision of real-time feedback in SBT. Existing methods either have low effectiveness in improving novice skills or suffer from low efficiency, resulting in their inability to be used in real-time. In this paper, we propose a neural network based method to generate feedback using the adversarial technique. The proposed method utilizes a bounded adversarial update to minimize a $L1$ regularized loss via back-propagation. We empirically show that the proposed method can be used to generate simple, yet effective feedback. Also, it was observed to have high effectiveness and efficiency when compared to existing methods, thus making it a promising option for real-time feedback generation in SBT.
\end{abstract}

\section{Introduction}
\label{sec:introduction}
Supporting the learning process through interactive feedback is important \cite{billings2012efficacy}. Appropriate and timely feedback intervention increases learning motivation, facilitates skill acquisition/retention, and reduces the uncertainty of how a student is performing \cite{davis2005interactive}. With the development of virtual reality techniques, simulation-based training (SBT) has become an effective training platform in a range of applications including surgery \cite{wijewickremadesign,masimulation}, military training \cite{cosma2011implementing}, and driver/pilot training \cite{de2011effect}. However, it still requires the presence of human experts so that real-time feedback can be provided during training to ensure that relevant skills are learned. This has been one of the obstacles to the spread of SBT systems \cite{lateef2010simulation}. As such, it is important to automate the generation of real-time feedback in SBT.

Feedback generation is a classical problem in artificial intelligence (AI) systems. Intelligent tutoring systems are one such class of AI systems that aims to provide immediate instruction or feedback to learners \cite{billings2012efficacy}. Another example is autonomous driving systems that take the surrounding environment as input and output feedback to the car to adjust the steering wheel \cite{chen2015deepdriving}. In reinforcement learning systems such as mobile robot navigation, the hardware or software agent learns its behaviour based on reward feedback from the environment \cite{sutton1998reinforcement}. 

When compared to the above mentioned applications, SBT focuses more on educational gains such as the acquisition of proper skills \cite{steadman2006simulation,lateef2010simulation}. As such, SBT requires a higher degree of ``hands-on" experiential interaction. Figure \ref{fig:tbs} shows an example of such a SBT system: The University of Melbourne Virtual Reality Temporal Bone Surgical Simulator \cite{wijewickrema2016provision}. Rule-based feedback tutoring methods that work in domains such as algebra and physics are not flexible enough for SBT that requires a high level of user interaction and complex user behaviour. Autonomous driving systems and reinforcement learning systems mainly focus on the outcomes and mostly deal with cognitive tasks. Therefore, feedback generation methods in these systems are not directly transferable to SBT, especially for non-cognitive SBT scenarios. The aim of this paper is to develop an automated feedback generation method that can be used in SBT via supervised learning.

\begin{figure}[!htb]
\centering
\includegraphics[width=0.48\textwidth]{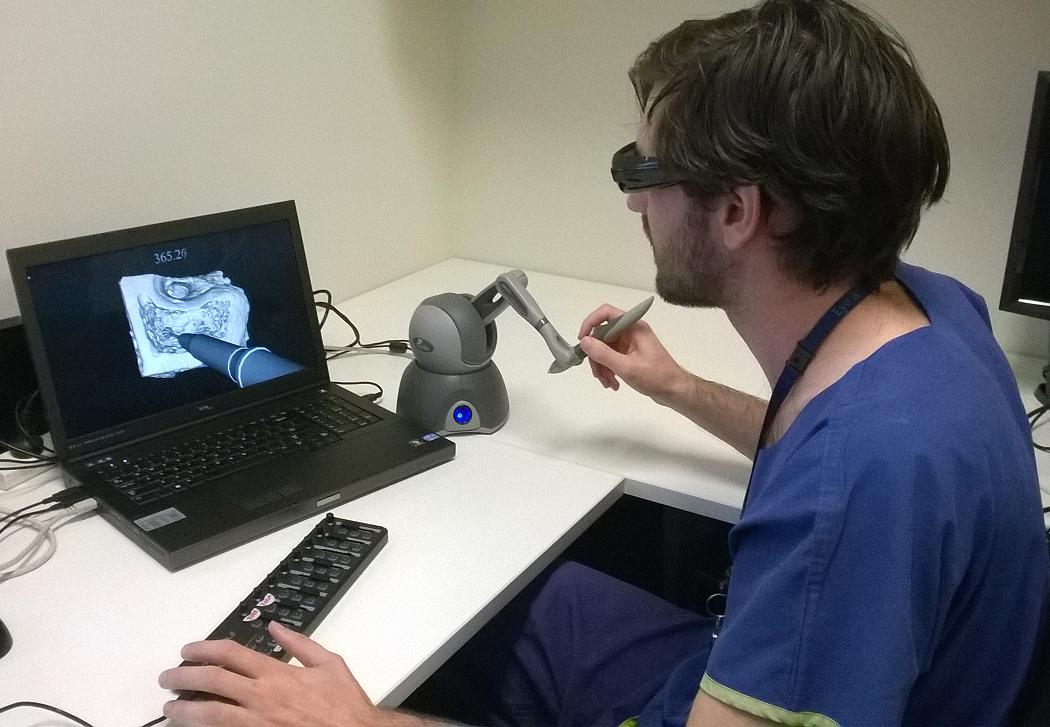}
\caption{\textit{The University of Melbourne Virtual Reality Temporal Bone Surgery Simulator}: it consists of a computer that runs a 3D model of a human temporal (ear) bone and a haptic device that provides tactile resistance to simulate drilling.}
\label{fig:tbs}
\end{figure}

Feedback generation in SBT has three challenges. First, feedback should be generated in a timely manner as delayed feedback can lead to confusion or even cause fatal consequences in reality. An acceptable time-limit is 1 second after inappropriate action is detected. This is because feedback should be provided before the learner makes the next move \cite{rojas2014impact}. Second, feedback should be actionable instructions that can be followed by the trainee to improve skills or correct mistakes. This is because SBT tasks often consist of a series of delicate operations that require precise instructions. Third, feedback should be simple, referring to only a few aspects of the skill, as practically people cannot focus on many things at a time. Also, this reduces distractions to the trainee and decreases cognitive load, thus increasing the usefulness of the feedback \cite{sweller1988cognitive}.

In this paper, we make the following contributions: 
\begin{itemize}
    \item We demonstrate how the adversarial technique can be used to generate actionable knowledge or feedback with neural networks.
    \item We propose a novel neural network based feedback generation method that works with a $L1$ regularized loss to control the simplicity of feedback and a bounded update to ensure the generated feedback has practical meaning. 
    \item We show that the proposed method has high effectiveness as well as high efficiency when compared to existing methods, making it possible to be used for real-time feedback generation in SBT.
\end{itemize}

The structure of the paper is as follows. Section 2 introduces related work in this field. Section 3 illustrates the real-time feedback process and the formal definition of the problem. The proposed feedback generation method is described in Section 4 and evaluated along with existing methods in Section 5. Section 6 concludes the paper.

\section{Related Work}
\label{sec:related}

The simplest way to provide feedback in SBT is the rule-based approach. The ``follow-me" approach (ghost drill) \cite{rhienmora2011intelligent} and the ``step-by-step" approach \cite{wijewickrema2016provision} in surgical simulation are examples of this approach. However, it may be hard for a novice who has limited experience to follow a ghost drill at his own pace, and step-by-step feedback will not respond if the trainee does not follow the suggested paths.

Other works utilize artificial intelligence techniques to generate feedback that can change adaptively in response to the novice's abilities and skills. One example is the use of Dynamic Time Warping (DTW) to classify a time series of surgical data and support feedback provision in lumbar disk herniation surgery \cite{forestier2012classification}. However, this approach is less accurate at the beginning of a procedure when not much data is available. A supervised pattern mining algorithm was used in temporal bone surgery to identify significant behavioural patterns, classified as novice or expert, based on existing examples \cite{zhou2013pattern}. Here, when a novice pattern is detected during drilling, the closest expert pattern was delivered as feedback. However, it is very difficult to identify significant expert/novice patterns as novices and experts often share a large proportion of similar patterns. 

A similar attempt used a prediction model to discriminate the expertise levels using random forests, and then generate feedback directly from the prediction model itself \cite{zhou2013constructive}. Here, the generated feedback was the optimal change that would change a novice level to an expert, based on votes of the random forest (Split Voting (SV)). Decision trees and random forests were used in other research areas as well to provide feedback. For example, a decision tree  based method was used in customer relationship management to change disloyal customers to loyal ones \cite{yang2003postprocessing,yang2007extracting}. Generating feedback from additive tree models such as random forest and gradient boosted trees is NP-hard, but the exact solution can be found by solving a transformed integer linear programming (ILP) problem \cite{cui2015optimal}.

In this paper, we propose a neural network based method to generate feedback using the adversarial technique. One intriguing property of neural networks is that the input can be changed by maximizing the prediction error so that it moves into a different class with high confidence \cite{szegedy2013intriguing}. This property has been used to generate \textit{adversarial examples} from deep neural nets in image classification \cite{goodfellow2014explaining}. An adversarial example is formed by applying small perturbations (imperceptible to the human eye) to the original image, such that the neural network misclassifies it with high confidence. Although the adversarial example has similarities to the feedback problem in that they both change the input to a different class, they are not synonymous. First, the adversarial example is formed by adding intentionally-designed noise that may result in states that do not exist or have practical meaning in a real-world dataset such as that of the feedback problem. Second, only a few changes to inputs are recommended for feedback, to make it useful to follow. These considerations lead to the formal problem definition below.

\section{Problem Definition}
In this section, we discuss the real-time feedback process, and show how skill/behaviour level is defined in SBT applications. We then formally define the feedback generation problem as applied to SBT.

\subsection{Feedback Process Overview}
Figure \ref{fig:fbsystem} illustrates the real-time feedback process in SBT. It operates in two steps: 1) offline training and 2) real-time feedback provision. In the offline stage, a feedback generation method is trained via supervised learning on labelled (novice/expert) skill samples. In real-time, when a trainee is practising on the simulator, novice skill will be captured and input into the feedback generation method to obtain feedback about where improvement is required. Technically, feedback is the suggested action that can improve novice skill to expert skill. Finally, the feedback will be delivered immediately to the trainee to improve behaviour. The focus of this paper is the feedback generation method as highlighted in grey.
% Figure \ref{fig:fbsystem} also illustrates the real-time feedback process.

\begin{figure}[!htb]
\centering
\includegraphics[width=0.48\textwidth]{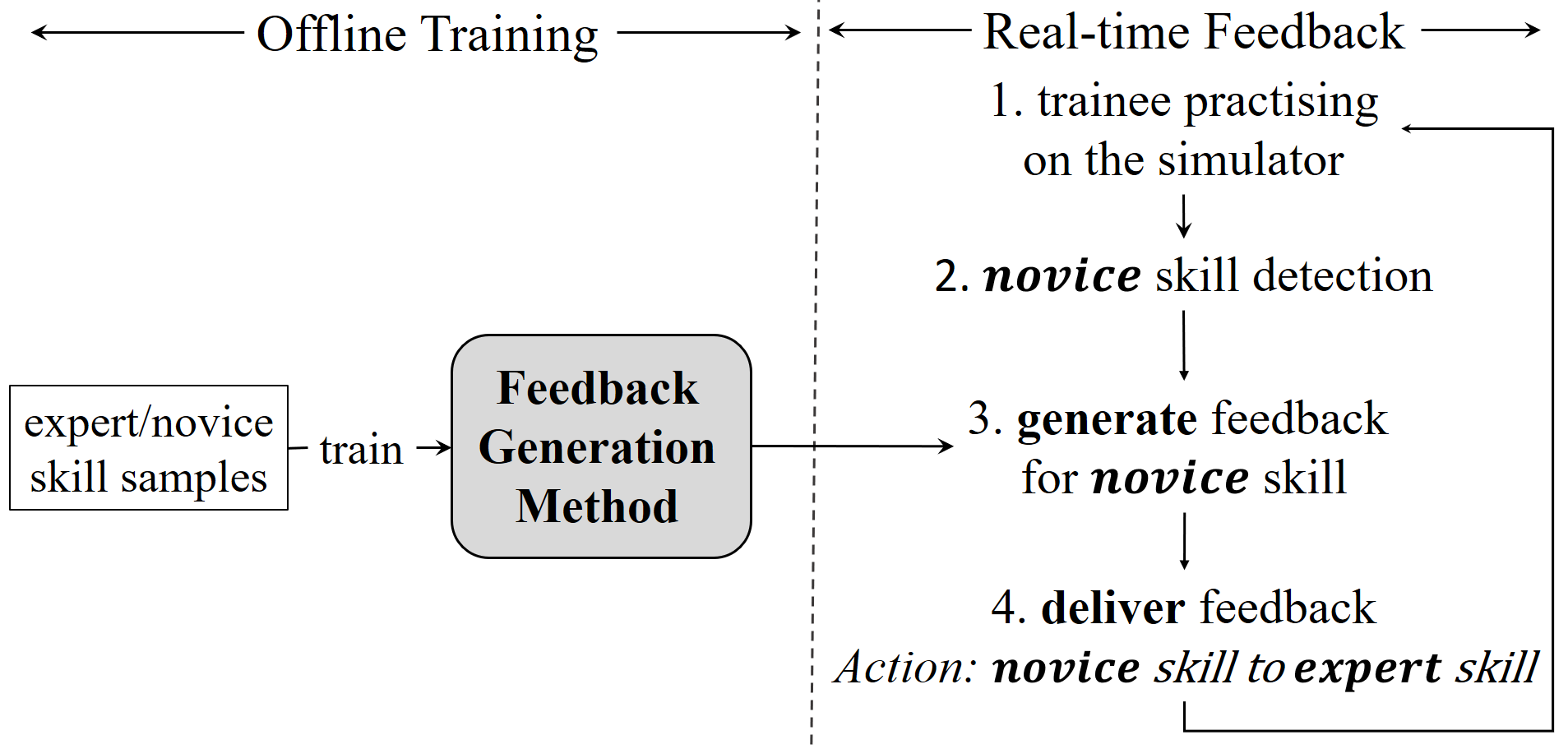}
\caption{The real-time feedback process in SBT.}
\label{fig:fbsystem}
\end{figure}

\subsection{Definition of User Skill}
\label{sec:skill}
SBT often works with multivariate time series data. This is because a SBT task often consists of a series of steps over a period of time. The skill level is usually defined over a period of time, based on the values of certain skill metrics. 

In general, skill metrics are: 1) motion-based, 2) time-based, 3) position-based, or 4) system settings. Motion-based metrics are often signals captured from haptic devices or sensors, for example, the speed and engine rpm in a driving simulator. Time-based metrics measure quantities such as reaction time in military training. Position-based metrics relate to the location of the current procedure and include quantities such as position coordinates and distance to landmarks. System settings refer to measures that affect the environment such as the magnification level in surgical simulation. 

\begin{definition}
\label{def:skill}
In simulation-based training, user skill is a feature vector summarizing user behaviour over an arbitrary period of time and annotated with class labels. 
\end{definition}

For example, consider a SBT environment for surgery. Here, the level of skill can be defined by the quality of a stroke, a continuous motion of the drill with no abrupt changes in direction. The period of time here, over which the user behaviour is summarised, is the time interval of the complete stroke. Metrics that define the quality of a stroke within this time interval include measures such as stroke length, speed, acceleration, duration, straightness, and force \cite{zhou2015automated}. We denote such a skill metric as a \textit{feature}. A vector of feature values that defines user skill is an \textit{instance} and is associated with a class label that denotes the skill level (expert or novice).

\subsection{Feedback Generation Problem}
\label{sec:problem}
Here, we define the feedback generation problem in an expert-novice perspective. We acknowledge that there may be more than 2 levels of expertise in some SBT applications. However, this can be easily addressed using the one-vs-rest approach.

In SBT, the feedback generation problem is to find the optimal action that can be taken to change a novice instance to an expert instance. Suppose the dataset $\mathcal{D}$ consists of $m$ features, $n$ instances defined by the feature vector $\mathbf{x}=(x_1,...,x_m)$, and associated with a class label $y\in\{0,1\}$ (1: expert, 0: novice). $H(\mathbf{x})$ is a prediction model learnt over $\mathcal{D}$. The feedback generation problem can then be defined as follows.

\begin{problem} \label{def:prob}
Given a prediction model $H(\mathbf{x})$ and a novice instance $\mathbf{x}$, the problem is to find the optimal action $A: \mathbf{x} \rightarrow \mathbf{x}_f$ that changes $\mathbf{x}$ to an instance $\mathbf{x}_f$ under limited cost $C$ such that $\mathbf{x}_f$ has the highest probability of being in the expert class:

\label{prob:maximisation}
    \begin{equation*}
        \begin{aligned}
        & \underset{A:\; \mathbf{x} \rightarrow \mathbf{x}_f}{\text{argmax}}
        & & H(\mathbf{x}_f) \\
        & \text{subject to}
        & & \mathscr{C}(\mathbf{x}, \mathbf{x}_f) \leq C,
        \end{aligned}
    \end{equation*}
\end{problem}

\noindent where, feedback $A:\mathbf{x} \rightarrow \mathbf{x}_f$ involves one or multiple feature changes (increase/decrease). For example, $A:(force=\mathbf{0.2}, duration=0.3) \rightarrow (force=\mathbf{0.5},duration=0.3)$ is ``increase $force$ to 0.5". $\mathscr{C}(\mathbf{x}, \mathbf{x}_f)$ is the cost function measuring the potential cost of feedback $A$. In SBT, $\mathscr{C}(\mathbf{x}, \mathbf{x}_f) = \lVert \mathbf{x}-\mathbf{x}_f \rVert_0$, i.e., the number of changed features. The cost limit $C$ is often a small integer such as 1 or 2 in SBT so as to meet the requirements discussed in Section \ref{sec:introduction}.

\section{Proposed Method}
To tackle the feedback generation problem, we propose the use of a neural network as the prediction model $H(\mathbf{x})$ and introduce a method that directly generates feedback from the neural network. Let $H_{\bm{\theta}}(\mathbf{x})$, with parameters/weights $\bm{\theta}$, be the neural network learnt with respect to the loss function $J(\bm{\theta}, \mathbf{x}, y)$, where $\mathbf{x}$ is the input or feature vector, $y$ the class value associated with $\mathbf{x}$, and $y^*$ the target class we want $\mathbf{x}$ to be in.

Recall that during the training process, the weights $\bm{\theta}$ are updated so that the loss $J(\bm{\theta}, \mathbf{x}, y)$ is minimized. Therefore, if we keep $\bm{\theta}$ fixed while the input $\mathbf{x}$ is updated so that $J(\bm{\theta}, \mathbf{x}, y)$ is maximized, we can get a new instance that has high confidence of being in the opposite class to its original class $y$ \cite{szegedy2013intriguing}. To maximize $J(\bm{\theta}, \mathbf{x}, y)$, the input can be updated in the positive direction of the gradient following Equation \eqref{eq:update_x}, where $\epsilon$ is the learning rate. 

\vspace{-2mm}
\begin{equation}
\label{eq:update_x}
\mathbf{x} = \mathbf{x} + \epsilon \nabla_{\mathbf{x}} J(\bm{\theta}, \mathbf{x}, y)
\end{equation}
\vspace{-2mm}

This is the property that has been used to generate adversarial examples in image classification. Since adversarial examples require small perturbations in the input image,  \cite{goodfellow2014explaining} applied a sign function to linearize the loss function around the current value of $\bm{\theta}$ , as shown in Equation \eqref{eq:sign_update_x}. This method updates all the pixels of the input image once to get small perturbations.

\vspace{-2mm}
\begin{equation}
\label{eq:sign_update_x}
\mathbf{x} = \mathbf{x} + \epsilon \; sign(\nabla_{\mathbf{x}} J(\bm{\theta}, \mathbf{x}, y))
\end{equation}
\vspace{-2mm}

Equation \eqref{eq:update_x} works well for two-class tasks. However, for multi-class tasks, there are more than one opposite classes to $y$. This means using Equation \eqref{eq:update_x} cannot guarantee the new instance has high confidence in the target class $y^*$. The alternative is minimizing the loss $J(\bm{\theta}, \mathbf{x}, y^*)$ with respect to the specific target class $y^*$ as defined in Equation \eqref{eq:update_x_2}. 

\vspace{-2mm}
\begin{equation}
\label{eq:update_x_2}
\mathbf{x} = \mathbf{x} - \epsilon \nabla_{\mathbf{x}} J(\bm{\theta}, \mathbf{x}, y^*)
\end{equation}
\vspace{-2mm}

Although Equation \eqref{eq:update_x_2} works for both two-class and multi-class tasks, it still has two potential problems that limit its use for feedback generation in SBT. First, it may change all input features, thus violating the constraint (e.g., $\lVert \mathbf{x}-\mathbf{x}_f \rVert_0 \leq C$) of Problem \ref{def:prob}. Second, the update may explode the values of inputs to extremely small or large values, similar to the \textit{exploding gradient problem} \cite{pascanu2012understanding}. However, in practice, some features may have a certain value range outside of which the feature is meaningless.

To solve the first problem, we introduce a $L1$ regularization term to $J(\bm{\theta}, \mathbf{x}, y^*)$ to control the sparsity of the change so as to generate simple feedback. The new loss function is defined in Equation \eqref{eq:sparsity}, where $\lambda$ is the regularization parameter and $\mathbf{x}_0$ is the original input that needs to be changed.

\vspace{-2mm}
\begin{equation}
\label{eq:sparsity}
J^{'}(\bm{\theta}, \mathbf{x}, y^*) = J(\bm{\theta}, \mathbf{x}, y^*) + \lambda ||\mathbf{x} - \mathbf{x}_0||_1
\end{equation}
\vspace{-2mm}

To solve the second problem, we propose a bounded update approach (see Equations \eqref{eq:update_x_3} and\eqref{eq:update_x_4}) as an alternative to Equation \eqref{eq:update_x_2}. It incorporates the value range (defined by lower and upper bounds) of a feature into the update to ensure the updated feature value is still within range.

\vspace{-2mm}
\begin{equation}
\label{eq:update_x_3}
\mathbf{x} = \mathbf{x} - \epsilon \mathcal{S}_{\mathbf{x}} \Big( \mathbf{x} \mathcal{S}_{\mathbf{x}} - \frac{\mathbf{a}}{2}\big(1+\mathcal{S}_{\mathbf{x}}\big) + \frac{\mathbf{b}}{2}\big(1-\mathcal{S}_{\mathbf{x}}\big) \Big)
\end{equation}
\vspace{-2mm}

\vspace{-2mm}
\begin{equation}
\label{eq:update_x_4}
\mathcal{S}_{\mathbf{x}} = sign(\nabla_{\mathbf{x}} J^{'}(\bm{\theta}, \mathbf{x}, y^*))
\end{equation}
\vspace{-2mm}

\noindent $\mathcal{S}_{\mathbf{x}}$ is the sign of the partial derivative of $J^{'}(\bm{\theta}, \mathbf{x}, y^*)$ with respect to $\mathbf{x}$. The upper and lower bounds of $\mathbf{x}$ are $\mathbf{a}$ and $\mathbf{b}$ respectively, i.e. $x_i \in [a_i,b_i]$. 

According to Equation \eqref{eq:update_x_3}, if the gradient $\nabla_{x_i} J^{'}(\bm{\theta}, \mathbf{x}, y^*)$ is positive (i.e., $\mathcal{S}_{x_i} = 1$), the update will become $x_i = x_i - \epsilon (x_i-a_i)$ which means $x_i$ moves a small step towards its lower bound $a_i$. Similarly, a negative gradient gives $x_i = x_i + \epsilon (b_i-x_i)$, a move towards its upper bound $b$. No update will be applied if the gradient is zero, as in this case, $\mathcal{S}_{x_i} = 0$. This bounded update not only guarantees the correct update direction to minimize the loss, but also ensures that $x_i \in [a_i, b_i]$ always holds true (see Lemma \ref{lemma_1} and proof).

\begin{lemma}
If $a_i \leq x_i \leq b_i$, $0 < \epsilon \ll 1$, and $x^{'}_{i} = x_i - \epsilon \mathcal{S}_{x_i} \big( x_i \mathcal{S}_{x_i} - \frac{a_i}{2}(1+\mathcal{S}_{x_i}) + \frac{b_i}{2}(1-\mathcal{S}_{x_i}) \big)$, then $a_i \leq x^{'}_{i} \leq b_i$.
\label{lemma_1}
\end{lemma}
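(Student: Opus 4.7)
The plan is to perform a case analysis on the value of $\mathcal{S}_{x_i}$, which, being the sign of a real number, takes only the three values $-1$, $0$, or $+1$. For each case, I would substitute the value into the stated update formula and simplify algebraically to reveal that $x^{'}_i$ is either $x_i$ itself or a convex combination of $x_i$ and one of the endpoints $a_i, b_i$. The desired containment then follows immediately, since a convex combination of two numbers lying in $[a_i, b_i]$ also lies in $[a_i, b_i]$.

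More concretely, first I would handle the positive-gradient case $\mathcal{S}_{x_i} = +1$. Substituting collapses the parenthesized term to $x_i - a_i$, so the update simplifies to $x^{'}_i = (1-\epsilon) x_i + \epsilon a_i$. Since $0 < \epsilon \ll 1$ and $a_i \leq x_i$, this is a convex combination lying in $[a_i, x_i] \subseteq [a_i, b_i]$. Next, in the negative-gradient case $\mathcal{S}_{x_i} = -1$, an analogous substitution reduces the parenthesized term to $b_i - x_i$, and after cancelling the outer negative sign I obtain $x^{'}_i = (1-\epsilon) x_i + \epsilon b_i$, a convex combination lying in $[x_i, b_i] \subseteq [a_i, b_i]$. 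Finally, if $\mathcal{S}_{x_i} = 0$, every term multiplied by $\mathcal{S}_{x_i}$ vanishes and $x^{'}_i = x_i$, which already lies in the interval by hypothesis. In all three cases the conclusion $a_i \leq x^{'}_i \leq b_i$ holds.

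The only mildly delicate step, and what I would consider the main obstacle, is recognising the selector structure of the formula: the factors $\tfrac{1}{2}(1+\mathcal{S}_{x_i})$ and $\tfrac{1}{2}(1-\mathcal{S}_{x_i})$ act as masks that pick out $a_i$ when $\mathcal{S}_{x_i}=+1$ and $b_i$ when $\mathcal{S}_{x_i}=-1$, while the outer factor $\mathcal{S}_{x_i}$ flips the sign appropriately. Once this is observed, the reduction to a convex combination is routine, and the hypothesis $0 < \epsilon \ll 1$ is used only to ensure $\epsilon \in (0,1)$, guaranteeing that the convex-combination coefficients are nonnegative so the endpoint-averaging argument applies.
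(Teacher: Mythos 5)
Your proposal is correct and follows essentially the same route as the paper's proof: a case analysis on the three possible values of $\mathcal{S}_{x_i}$, followed by substitution and simplification in each case. The only difference is cosmetic — where the paper verifies $x^{'}_i - a_i \geq 0$ and $x^{'}_i - b_i \leq 0$ by direct sign-checking, you phrase the same conclusion as $x^{'}_i$ being a convex combination $(1-\epsilon)x_i + \epsilon a_i$ or $(1-\epsilon)x_i + \epsilon b_i$, which is a slightly tidier way to finish but not a different argument.
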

 
\begin{proof}
The sign function $\mathcal{S}_{x_i}$ only has 3 outputs: 1,0 or -1.
\begin{case}
If $\mathcal{S}_{x_i} = 0$, then $x^{'}_{i} = x_i$.
\end{case}

\noindent In this case, $a_i \leq x^{'}_{i} = x_i \leq b_i$ holds true.

\begin{case}
If $\mathcal{S}_{x_i} = 1$, then $x^{'}_{i} = x_i - \epsilon (x_i-a_i)$.
\end{case}

\noindent In this case, $x^{'}_{i} - a_i = (1-\epsilon)(x_i-a_i)$ and $x^{'}_{i} - b_i = \epsilon(a_i-x_i)+(x_i-b_i)$. Then, $a_i \leq x_i \leq b_i$ and $0 < \epsilon \ll 1$ gives $x_i-a_i \geq 0$, $x_i-b_i \leq 0$ and $1-\epsilon > 0$. Therefore, $x^{'}_i - a_i \geq 0 $ and $x^{'}_i - b_i \leq 0$, that is, $a_i \leq x^{'}_i \leq b_i$. 

\begin{case}
If $\mathcal{S}_{x_i} = -1$, then $x^{'}_i = x_i + \epsilon (b_i-x_i)$.
\end{case}

\noindent And in this case, $x^{'}_i - a_i = \epsilon(b_i-x_i) + (x_i-a_i)$ and $x^{'}_i - b_i = (1-\epsilon)(x_i-b_i)$. 
Similarly, $b_i-x_i \geq 0$, $x_i-a_i \geq 0$ and $1-\epsilon > 0$ gives $x^{'}_i - a_i \geq 0 $ and $x^{'}_i - b_i \leq 0$, that is, $a_i \leq x^{'}_i \leq b_i$. 
\\
\\
\noindent To conclude, in all cases, $a_i \leq x^{'}_i \leq b_i$ holds true.
\end{proof}

Equations \eqref{eq:sparsity}, \eqref{eq:update_x_3} and \eqref{eq:update_x_4} give the definition of the proposed ``neural network-based feedback (NNFB) method". NNFB takes a novice instance $\mathbf{x}$ as input, iteratively updates $\mathbf{x}$ (different from the one-time-update in generating adversarial examples) until it converges or meets the terminating criteria. Let the generated new instance be $\mathbf{x}_f$, the feedback is then the action $A: \mathbf{x} \rightarrow \mathbf{x}_f$ (see the example in Problem \ref{prob:maximisation}).

When feedback is delivered, we need to ensure that it contains only $C$ features. Although, the $L1$ regularization reduces the number of feature changes in general, in the absence of valid feedback with a low number of feature changes, it may still result in ones with higher numbers of feature changes. To overcome this issue, we suggest a post-selection process that iteratively tests all feature changes and select the ones with $C$ or less changes that result in the best improvements.

The proposed method (NNFB) is easily generalizable to different SBT applications. First, the regularization term in $J^{'}(\bm{\theta}, \mathbf{x}, y^*)$ can be adjusted accordingly for different applications (for example, $L2$ norm for applications that prefer small changes). Furthermore, NNFB offers flexible control over feature changes as the lower and upper bounds are adjustable for different features and even for different input instances. For example, we can set $a_i=b_i=x_i$ for a categorical feature that cannot be changed, such as prior simulation experience. This flexibility also benefits those applications that have discrete cost functions as some explicit cost limits can be easily incorporated into the bounds.

\section{Experimental Validation}
\label{sec:experiment}
In this section, we first describe the two real-world datasets that were used in the experiments. Then, we briefly introduce the existing methods that the proposed method was compared against, followed by the experimental setup. Finally, we discuss the experiment results.

\subsection{Datasets}
We tested our method on two real-world SBT datasets. These datasets were collected from a temporal bone surgical simulator designed to train surgeons in ear-related surgeries. 7 expert and 12 novice surgeons performed two different surgeries that require very different surgical skills: cortical mastoidectomy - dataset 1 ($\mathcal{D}1$)  and posterior tympanotomy - dataset 2 ($\mathcal{D}2$). Surgical skill is defined by 6 numeric skill metrics: stroke length, drill speed, acceleration, time elapsed, the straightness of the trajectory and drill force (see example in Section \ref{sec:skill} for more details). The skill metrics were recorded by the simulator at a rate of approximately 15 Hz. Overall, $\mathcal{D}1$ includes 60K skill instances (28K expert and 32K novice) while $\mathcal{D}2$ includes 14K skill instances (9K expert and 5K novice). Both datasets were normalized to the range of $[0,1]$ using feature scaling as follows. 

\begin{equation}
\label{eq:scaling}
    \mathbf{x}^{'} = \frac{\mathbf{x}-\mathbf{x}_{min}}{\mathbf{x}_{max} - \mathbf{x}_{min}}
\end{equation}

\noindent where, $\mathbf{x}$ and $\mathbf{x}^{'}$ are the original and scaled feature vectors respectively and $\mathbf{x}_{min}$ and $\mathbf{x}_{max}$ are the minimum and maximum feature values of $\mathbf{x}$ respectively.

\subsection{Compared Methods}
\setcounter{footnote}{0}
Existing feedback generation methods compared with NNFB are as follows. 
\begin{itemize}
\item Split Voting (SV):
This is the random forest based state-of-the-art generation method for providing real-time feedback \cite{zhou2013constructive} as discussed in Section \ref{sec:related}.
\item Integer Linear Programming (ILP):
This method solves the random forest feedback generation problem by transforming it to an integer linear programming problem \cite{cui2015optimal} as discussed in Section \ref{sec:related}.
\item Random Iterative (RI):
This method randomly selects a feature and iteratively selects the best value among the feature's value partitions in the random forest \cite{cui2015optimal}.
\item Random Random (RR):
This method randomly picks a feature from a novice instance and selects a random change to that feature as the suggested feedback.
\end{itemize}

\subsection{Experimental Setup}
\label{sec:setups}

For testing, we randomly chose one novice participant, then took all instances performed by this novice as the test set. The remaining instances were used for training. This simulates the real-world scenario of an unknown novice using the simulator. Parameter tuning was performed on the training data based on a 11-fold leave-one-novice-out cross-validation. In each fold, we took all instances from one randomly chosen novice as the validation set.

All methods were restricted to generate feedback with only one feature change, which is a typical requirement in SBT. This is a binary task as there are only 2 skill levels (\textit{expert} and \textit{novice}). All methods were then evaluated using 2 measures: 1) efficiency and 2) effectiveness. Overall, a good feedback generation method should have high effectiveness and high efficiency (low time-cost).

Efficiency was measured using the time-cost (in seconds) spent on average to generate feedback for one novice instance. The novice instance $\mathbf{x}$ will be changed to the target instance $\mathbf{x}_f$ by the feedback $A: \mathbf{x} \rightarrow \mathbf{x}_f$ (see Section \ref{sec:problem}). Thus, we use the quality of the target instances (i.e., $\{\mathbf{x}_f\}$) to measure the effectiveness of the feedback. As defined in Equation \eqref{eq:effectiveness}, effectiveness $\mathscr{E}$ is the percentage of expert instances in $\{\mathbf{x}_f\}$. 

\begin{equation}
\label{eq:effectiveness}
\mathscr{E} = \frac{|\{\ \mathbf{x}_f| \, \mathbf{x}_f \:is \: an\: expert\: instance\}|}{|\{\mathbf{x}_f \}|}
\end{equation}

However, how instances are classified is dependent on the classifier used. To obtain more convincing results, we used 6 classifiers of different types for evaluation. The evaluation classifiers are: neural network (NN), random forest (RF), logistic regression (LR), SVM (RBF kernal), naive Bayes (NB) and KNN ($K=10$). A generation method that scores consistently high levels of $\mathscr{E}$ across classifiers is deemed effective.

Experiments were carried out on a typical PC with 2.40GHz CPU. The ILP solver used for the ILP method was CPLEX\footnote{https://www-01.ibm.com/software/commerce/optimization/cplex-optimizer} as suggested by the authors, and the neural network/random forest implementations we used were from scikit-learn. Default settings in scikit-learn were used for parameters not specifically mentioned here.

\subsection{Parameter Tuning}
Parameter tuning was performed on the training data with a 11-fold leave-one-novice-out cross-validation as mentioned above in Section \ref{sec:setups}. A two-layer neural network architecture was used for NNFB. For $\mathcal{D}1$, a neural network with 250 hidden neurons was selected for NNFB while a random forest with 120 trees was selected for SV, RI and ILP. For $\mathcal{D}2$, NNFB used a neural network with 120 hidden neurons while SV, RI and ILP used a random forest with 100 trees. These parameters were selected based on the turning point of the number of hidden neurons or the number of trees with respect to the mean squared error (MSE) of the neural network and random forest respectively. 

\begin{figure}[ht]
\centering
\includegraphics[width=0.48\textwidth]{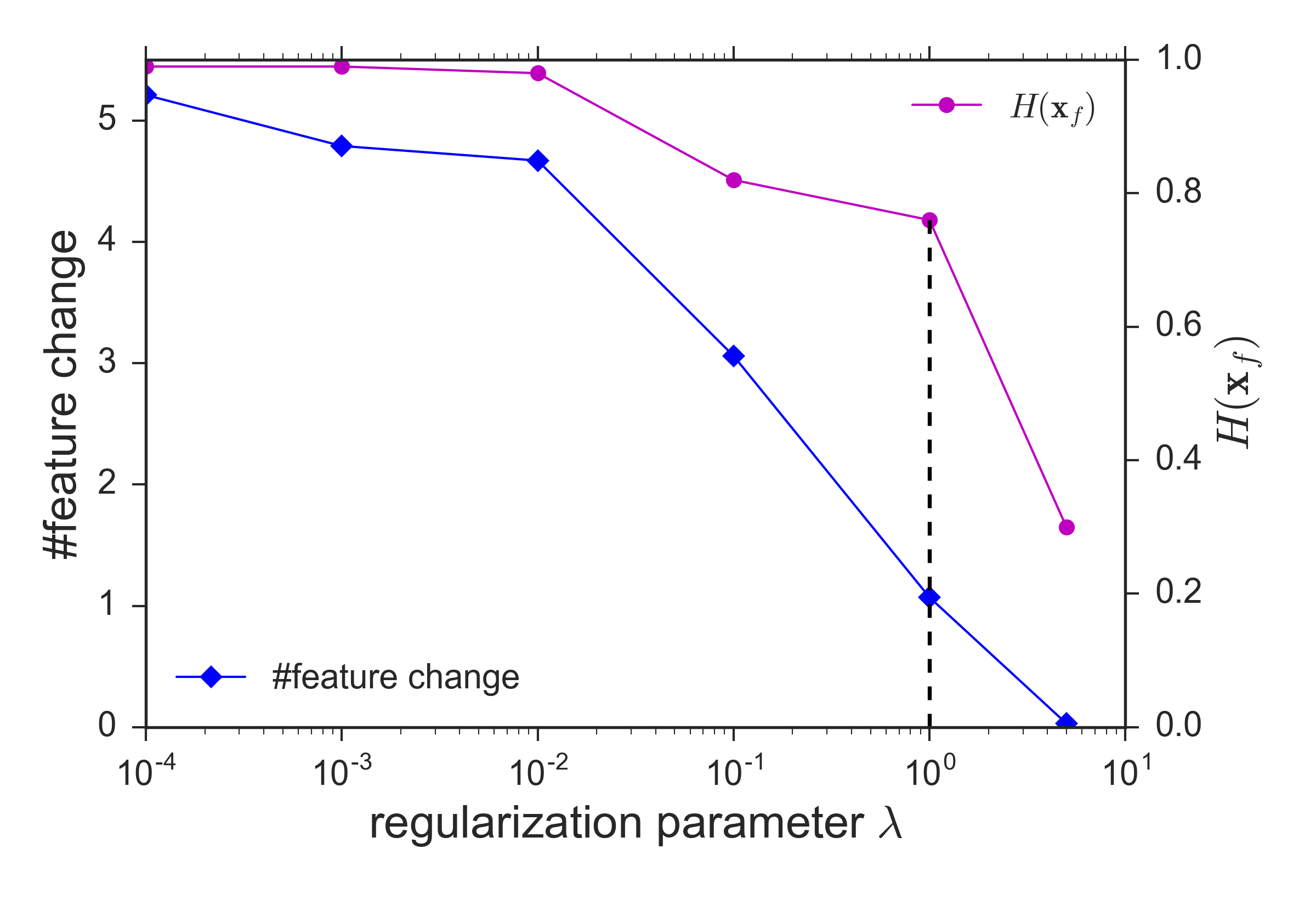}
\caption{ With the increases of $\lambda$, the number of feature changes in the feedback decreases but the confidence of $\mathbf{x}_f$ being in the expert class ($H(\mathbf{x}_f)$) remains high.}
\label{fig:n_change}
\end{figure}

\begin{table*}[t]
\renewcommand{\arraystretch}{1.5}
\caption{The effectiveness (mean$\pm$std) tested by 6 evaluation classifiers. The best results are highlighted in bold.}
\label{table:effect_compare}
\centering
\begin{tabular}{c|c|cccccc}
\hline
% \cline{3-8}
 \multicolumn{2}{c|}{}  & NN & RF & LR & SVM & NB & KNN \\ \hline
\multirow{5}{*}{$\mathcal{D}1$} & RR & 0.19$\pm$0.06 & 0.23$\pm$0.10 & 0.35$\pm$0.07 & 0.27$\pm$0.06 & 0.32$\pm$0.12 & 0.30$\pm$0.05 \\ 
% IT & 0.98$\pm$0.04 & 1.00$\pm$0.00 & 0.87$\pm$0.08 & 0.96$\pm$0.04 & 0.76$\pm$0.12 & 0.76$\pm$0.08 \\ \hline
& RI & 0.44$\pm$0.07 & 0.39$\pm$0.04 & 0.50$\pm$0.08 & 0.46$\pm$0.06 & 0.42$\pm$0.12 & 0.40$\pm$0.08 \\ 
& SV & 0.63$\pm$0.07 & 0.59$\pm$0.06 & 0.60$\pm$0.07 & 0.62$\pm$0.06 & 0.50$\pm$0.11 & 0.53$\pm$0.07 \\ 
& ILP & 0.72$\pm$0.04 & \textbf{0.87$\pm$0.00} & 0.71$\pm$0.05 & 0.76$\pm$0.04 & \textbf{0.70$\pm$0.11} & \textbf{0.76$\pm$0.04} \\ 
& NNFB & \textbf{0.86$\pm$0.01} & 0.82$\pm$0.08 & \textbf{0.78$\pm$0.05} & \textbf{0.82$\pm$0.04} & 0.68$\pm$0.14 & 0.73$\pm$0.08 \\ \hline

\multirow{5}{*}{$\mathcal{D}2$} & RR & 0.21$\pm$0.04 & 0.22$\pm$0.07 & 0.29$\pm$0.04 & 0.37$\pm$0.02 & 0.32$\pm$0.11 & 0.32$\pm$0.06 \\ %\cline{2-8}
% IT & 0.98$\pm$0.04 & 1.00$\pm$0.00 & 0.87$\pm$0.08 & 0.96$\pm$0.04 & 0.76$\pm$0.12 & 0.76$\pm$0.08 \\ \hline
& RI & 0.48$\pm$0.04 & 0.49$\pm$0.04 & 0.47$\pm$0.09 & 0.52$\pm$0.05 & 0.47$\pm$0.12 & 0.43$\pm$0.10 \\ 
& SV & 0.61$\pm$0.08 & 0.69$\pm$0.04 & 0.62$\pm$0.05 & 0.61$\pm$0.07 & 0.56$\pm$0.11 & 0.59$\pm$0.04 \\ 
& ILP & 0.88$\pm$0.04 & \textbf{0.90$\pm$0.02} & 0.79$\pm$0.07 & \textbf{0.77$\pm$0.03} & 0.78$\pm$0.12 & \textbf{0.84$\pm$0.09} \\ 
& NNFB & \textbf{0.92$\pm$0.02} & 0.82$\pm$0.06 & \textbf{0.81$\pm$0.07} & 0.72$\pm$0.05 & \textbf{0.79$\pm$0.11} & 0.81$\pm$0.07 \\ \hline
\end{tabular}
\end{table*}

In terms of the regularization parameter $\lambda$ in NNFB, Figure \ref{fig:n_change} indicates that larger $\lambda$ results in simple feedback with a fewer number of feature changes. When $\lambda=1$, a feedback on average consists of only one feature change, but remains highly confident ($H(\mathbf{x}_f)>0.7$) to change a novice instance to an expert instance. Therefore, we chose $\lambda=1$ for NNFB. Since datasets have been normalised, the upper bounds for all features are 1 and the lower bounds are 0. Other settings for NNFB include Rectified Linear Unit (ReLU) activation function \cite{glorot2011deep}, cross entropy loss and learning rate $\epsilon=\num{0.0001}$.

\subsection{Results}

We first demonstrate the overall performance considering both effectiveness and efficiency. Figure \ref{fig:nnfb_performance} illustrates the effectiveness of each method as evaluated using 6 different evaluation classifiers with respect to the time-cost (inverses to efficiency) for each dataset. As seen in the figure, the proposed method shows the desired performance of highest effectiveness at an acceptably low time-cost (within the real-time time-limit) when compared to the other methods. This proves that the adversarial technique can be used to generate effective and timely feedback for SBT. Note that the slightly higher variance of the NNFB method indicates the varying resistance of test classifiers to adversarial generation.

\begin{figure}[ht!]
\centering
\includegraphics[width=0.48\textwidth]{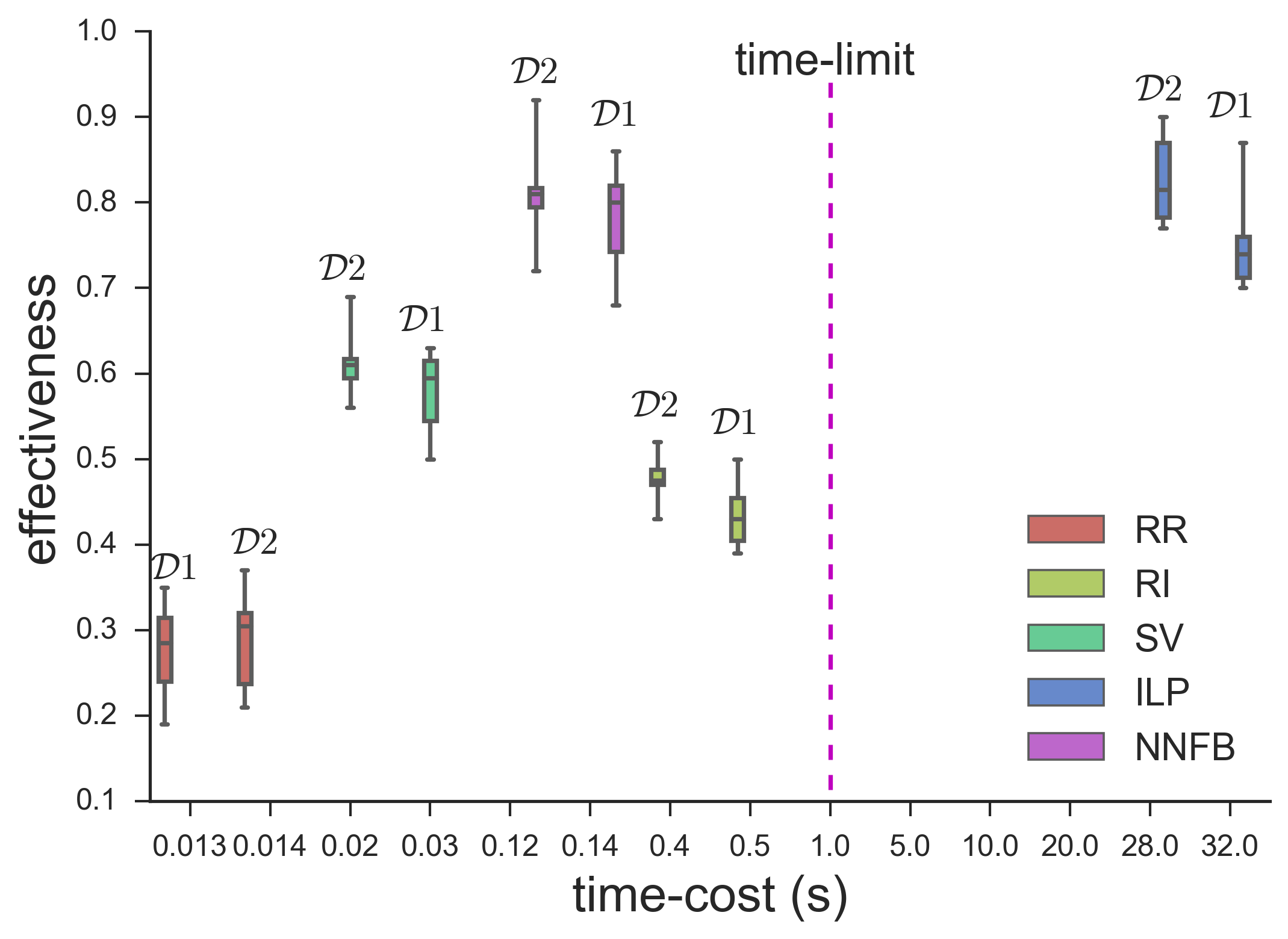}
\caption{Box plot representing the performance of the 6 evaluation classifiers with respect to effectiveness and time-cost. Each method has two boxes that represent the 2 datasets $\mathcal{D}1$ and $\mathcal{D}2$. Colored view is recommended.} 
% This is a box plot of the effectiveness distribution vs. time-cost (inverses to efficiency). The quartile lines divide the distribution into 4 quartile groups with each represent $25\%$ of the distribution. The diamonds are outliers. Each method has two boxes w.r.t. dataset $\mathcal{D}1$ and $\mathcal{D}2$. The best method can be found in the desired quadrant defined by the dash coordinates. Colored view is recommended.
\label{fig:nnfb_performance}
\end{figure}

Detailed results for effectiveness of the feedback generation methods across the 6 evaluation classifiers is shown in Table \ref{table:effect_compare} for the datatsets $\mathcal{D}1$ and $\mathcal{D}2$. On both datasets, NNFB achieved comparable performance to ILP and outperformed all others methods across all classifiers. However, as shown in Table \ref{table:efficiency}, ILP violates the real-time time-limit as discussed in Section \ref{sec:related}, and as such, will not be suitable for most SBT applications. Although both RR and SV are more efficient than the proposed method in feedback generation, they show significantly lower levels of effectiveness when compared to NNFB. Thus, it can be concluded that in terms of both effectiveness and efficiency, the proposed method is the best suited for providing real-time feedback in SBT applications.

\begin{table}[ht]
\renewcommand{\arraystretch}{1.5}
\caption{The time-cost (mean$\pm$std in seconds) for generating one feedback, tested on datasets $\mathcal{D}1$ and $\mathcal{D}2$.}
\label{table:efficiency}
\centering
\begin{tabular}{c|cc}
\hline
  & $\mathcal{D}1$ & $\mathcal{D}2$ \\ \hline
RR & 0.013$\pm$0.004 & 0.014$\pm$0.001\\ 
RI & 0.504$\pm$0.098 & 0.401$\pm$0.020\\ 
SV & 0.023$\pm$0.003 & 0.017$\pm$0.003 \\ 
ILP & 31.738$\pm$2.439 & 27.760$\pm$3.107 \\ 
NNFB & 0.142$\pm$0.029 & 0.121$\pm$0.016\\ \hline
\end{tabular}
\end{table}

\section{Conclusion and Discussion}
In this paper, we introduced a technique for the adversarial generation of real-time feedback with neural networks for SBT. The proposed method (NNFB) applies a bounded adversarial update on the novice skill vector to generate an optimal expert skill vector in order to be used in the provision of feedback. To ensure that the suggested action is simple enough to practically undertake, we adopted $L1$ regularization to obtain feedback with a fewer number of feature changes. We explored theoretically the validity of NNFB, and showed empirically that it outperforms existing methods in providing effective real-time feedback.

Improving human performance in practice is a very challenging task. It involves many aspects of the learning process such as the learning environment, the task complexity, the knowledge level of the leaner, and the feedback intervention. In the future, we will deploy the proposed method to SBT environments and conduct user studies with human experts to further validate the method and investigate its effectiveness in teaching skills in practical applications.

\section*{Acknowledgements}

The authors would like to thank the US Office of Naval Research for funding this project.

\bibliographystyle{named}
\bibliography{ijcai17}

\end{document}